\documentclass{article}


\PassOptionsToPackage{numbers, compress}{natbib}
\usepackage[preprint]{neurips_2022}




\usepackage[utf8]{inputenc} 
\usepackage[T1]{fontenc}    
\usepackage{hyperref}       
\usepackage{url}            
\usepackage{booktabs}       
\usepackage{amsfonts}       
\usepackage{nicefrac}       
\usepackage{microtype}      
\usepackage{xcolor}         
\usepackage{amsmath}
\usepackage{amsthm}

\usepackage{graphicx}
\usepackage{wrapfig}
\usepackage{caption}
\usepackage{subcaption}
\usepackage{amssymb}
\usepackage{booktabs}

\newtheorem{theorem}{Theorem}[section]

\newcommand{\ts}{\textsuperscript}

\title{Unifying physical systems' inductive biases in neural ODE using dynamics constraints}

%

\author{%
  Y. H. Lim \& M. F. Kasim \\
  Machine Discovery Ltd.\\
  Oxford, United Kingdom \\
  \texttt{\{yi.heng, muhammad\}@machine-discovery.com} \\
}

\begin{document}

\maketitle

\begin{abstract}
    Conservation of energy is at the core of many physical phenomena and dynamical systems. There have been a significant number of works in the past few years aimed at predicting the trajectory of motion of dynamical systems using neural networks while adhering to the law of conservation of energy. Most of these works are inspired by classical mechanics such as Hamiltonian and Lagrangian mechanics as well as Neural Ordinary Differential Equations. While these works have been shown to work well in specific domains respectively, there is a lack of a unifying method that is more generally applicable without requiring significant changes to the neural network architectures. In this work, we aim to address this issue by providing a simple method that could be applied to not just energy-conserving systems, but also dissipative systems, by including a different inductive bias in different cases in the form of a regularisation term in the loss function. The proposed method does not require changing the neural network architecture and could form the basis to validate a novel idea, therefore showing promises to accelerate research in this direction.
\end{abstract}

\section{Introduction}
\label{introduction}
Learning systems' dynamics with neural networks from data is one of the promising applications of deep neural networks in physical sciences.
It has been shown that given enough data, neural networks can uncover the hidden dynamics of physical systems~\cite{chen2018neural-ode, greydanus2019hamiltonian, cranmer2020lnn, chen2019symplectic-rnn} and can be used to simulate the systems with new conditions~\cite{greydanus2022dissipative}.

The most straightforward way in learning a system's dynamics is by training a neural network to learn the system's dynamics given the states at the time, $\mathbf{s}(t)$, i.e.
\begin{equation}
\label{eq:states-dynamics}
    \mathbf{\dot{s}} = \frac{d \mathbf{s}}{d t} = \mathbf{f}(\mathbf{s}(t)).
\end{equation}
Once the neural network is trained, the system's behaviour can be simulated by solving the equation above using ordinary differential equations solver (ODE).
This method is typically called neural ordinary differential equations (NODE)~\cite{chen2018neural-ode} in the literature.

Despite its simplicity, NODE has a difficulty in incorporating inductive biases of systems into training.
For example, if a system is known to conserve energy, the NODE approach above typically results in growing or shrinking of the energy over a long period of time.
A popular workaround is to learn the Hamiltonian~\cite{greydanus2019hamiltonian, toth2019hgn, sanchez2019hogn} or Lagrangian~\cite{cranmer2020lnn} of the energy-conserving system with a neural network, then get the dynamics as the derivatives of the learned quantity.
Although this approach works in learning energy-conserving systems, it has been shown that to incorporate a specific inductive bias, it takes a significant effort to design a special neural network architecture or to obtain states in special coordinate systems.

Another example is incorporating a bias if the system is known to be dissipative or losing energy.
Like in the energy-conserving case, a lot of efforts go into designing special neural network architectures or choosing the optimal weights to ensure stability~\cite{tuor2020constrained-stability, xiong2021neural-transient}.
Although those methods work well in getting stable dynamics, changing neural network architectures could potentially reduce the expressiveness of the neural networks.

In this paper, we show how to incorporate various inductive biases in learning systems' dynamics by having various constraints on the dynamics function, including Hamiltonian systems and dissipative systems, without changing the architecture of the neural networks. Where necessary, we also include Invertible Neural Network to transform the coordinate system of the input states, and enforce the inductive biases in the transformed coordinate system. Our code for this paper is available at \hyperlink{https://anonymous.4open.science/r/constr-F2D4/}{https://anonymous.4open.science/r/constr-F2D4/}.

\section{Related works}
Motivated by Hamiltonian mechanics, Hamiltonian Neural Network (HNN) was proposed in \cite{greydanus2019hamiltonian}. HNN takes a set of positions and their corresponding general momenta as input, and models a scalar function called the Hamiltonian. The derivatives of the learned Hamiltonian with respect to the inputs are then computed and multiplied by a symplectic matrix, giving the learned state derivatives. Since the symplectic matrix is manually introduced, HNN has a nice property of observing symplectic structure through time, thus conserving energy. In practice, however, the applicability of HNN is limited by the fact that it requires input data to be prepared in canonical form, which is usually difficult to obtain. There have been a lot of follow-up works that build on the foundation of HNN and expand it to different settings including dissipative systems ~\cite{greydanus2022dissipative, zhong2020dissipative}, generative networks \cite{toth2019hgn} and graph networks \cite{sanchez2019hogn}, while others try to improve HNN by simplifying it \cite{finzi2020simplifying, gruver2022deconstructing}.

To tackle the shortcoming of HNN, another line of work based on Lagrangian mechanics, called Lagrangian Neural Network (LNN), was proposed in \cite{cranmer2020lnn}. As the name suggests, the Lagrangian is modelled by a neural network, and the expression to compute the second-order state derivatives could be obtained by algebraically rearranging the Euler-Lagrange equation. With the introduction of Lagrangian prior in the neural network, LNN manages to conserve the total energy of a system without limiting the states to be strictly canonical. Since LNN assumes a coordinate system in which states are easily measured in practice, it can be applied to a wider range of problems where HNN fails. However, to obtain the second-order state derivatives, the inverse Hessian of the neural network must be computed, making LNN computationally expensive, susceptible to ill-conditioning, and sensitive to a poor choice of activation function such as ReLU.

Neural Symplectic Form (NSF) \cite{chen2021nsf} exploits the coordinate-free formulation of the Hamiltonian with symplectic 2-form. To speed up the computation, an elegant method to model the symplectic 2-form as an exterior derivative of a parameterised 1-form was proposed. However, there are a few drawbacks in this work. Firstly, without a good understanding of exterior calculus and differential geometry, it could be difficult to understand the motivation behind the work. Secondly, it is expensive to compute the inverse of the learnable skew-symmetric matrix which requires $\mathcal{O}(N^3)$ operations, where $2N$ is the number of state variables. Thirdly, the training of NSF is unstable and tends to plateau very quickly, see Appendix \ref{subsec:nsf}. To the best of our knowledge, the latter has not been addressed in any work to date.

Invertible Neural Networks (INNs) are neural networks that are elegantly designed in such a way that the inverses of the neural networks are always available with just a forward pass. They have found wide adoption in flow-based generative models and are typically used to map data to a latent space with a simpler distribution. \cite{dinh2014nice, dinh2016density} propose a simple building block called an affine coupling layer whose inverse is easily attainable, and an invertible neural network is obtained by stacking a few of these affine coupling layers in a sequence. In each affine coupling layer, the inputs are permuted since the INNs are not permutation invariant. \cite{kingma2018glow} further improves the model by introducing a learnable $1 \times 1$ invertible convolution to replace the fixed permutation of channel dimension used in \cite{dinh2016density}. In connection with NODE, \cite{zhi2021learning} leverages INNs to map the underlying vector field of an ODE system to a base vector field, and others \cite{chen2022kam, jin2022learning} have attempted to include INNs in Hamiltonian systems.


\section{Methods}
\label{methods}
Consider a system with states $\mathbf{s}\in\mathbb{R}^{n_s}$, where its states' dynamics depend on the states at the given time, as written in equation \ref{eq:states-dynamics}.
The function $\mathbf{f}$ from equation \ref{eq:states-dynamics} can be represented with an ordinary neural network.
The training can then be done by minimizing the loss function,
\begin{equation}
    \mathcal{L} = \left\lVert\mathbf{\hat{\dot{s}}} - \mathbf{\dot{s}}\right\rVert^2 + w_c \mathcal{C}\left(\mathbf{\dot{s}}\right).
\end{equation}
The term $\mathbf{\hat{\dot{s}}}$ is the time-derivative of the observed states $\mathbf{s}$ from the experimental data, $\mathbf{\dot{s}}$ is the computed states' dynamics from equation \ref{eq:states-dynamics}, $w_c$ is the constraint's weight, and $\mathcal{C}(\cdot)$ is the constraint applied to the dynamics function, $\mathbf{\dot{s}}$.
The last term, i.e. the constraint, is the term that can be adjusted based on the inductive bias of the system.

\subsection{Hamiltonian systems}

The states' dynamics of Hamiltonian systems can be written as~\cite{greydanus2019hamiltonian, chen2021nsf}
\begin{equation}
\label{eq:hnn-dynamics}
    \mathbf{\dot{s}} = \mathbf{J}\nabla H(\mathbf{s})\ \ \mathrm{where}\ \ \mathbf{J} = \left(\begin{matrix}\mathbf{0} & \mathbf{I} \\ -\mathbf{I} & \mathbf{0} \end{matrix}\right)
\end{equation}
where $H$ is the Hamiltonian of the system (typically the energy in some cases) and $\nabla H$ is the gradient of the Hamiltonian with respect to $\mathbf{s}$.
The dynamics in the equation above preserve the quantity $H$, i.e. $dH/dt=0$, which makes it suitable to simulate systems with constant Hamiltonian.
If the dynamics are learned using Hamiltonian neural network~\cite{greydanus2019hamiltonian}, the dynamics will be equal to equation \ref{eq:hnn-dynamics}.
The Jacobian in the Hamiltonian system can be written as $\partial \mathbf{\dot{s}} / \partial \mathbf{s} = \mathbf{J}(\partial^2 H/\partial \mathbf{s}\partial \mathbf{s})$.
As the Hessian of the Hamiltonian $(\partial^2 H/\partial \mathbf{s}\partial \mathbf{s})$ is a symmetric matrix, the Jacobian of a Hamiltonian system is a Hamiltonian matrix.

Being a Hamiltonian matrix, the Jacobian of the dynamics in equation \ref{eq:hnn-dynamics} has special properties.
One of them is having zero trace ($\nabla \cdot \mathbf{\dot{s}} = 0$), which means that there is no source or sink in the vector field of $\mathbf{\dot{s}}$.
Another property is near the local minimum of $H(\mathbf{s})$ (if any), the eigenvalues of the Jacobian $\partial \mathbf{\dot{s}}/\partial \mathbf{s}$ are pure-imaginary which guarantees stability (see appendix \ref{subsec:stability-hamilt-near-locmin} for the derivations).

In order to get the dynamics as in equation \ref{eq:hnn-dynamics} by learning the dynamics $\mathbf{\dot{s}}$ directly, it has to be constrained so that the matrix $\mathbf{J}^{-1}(\partial \mathbf{\dot{s}} / \partial \mathbf{s})$ is symmetric.
Therefore, we can apply the constraint
\begin{equation}
\label{eq:hamiltonian-constraint}
    \mathcal{C}_H\left(\mathbf{\dot{s}}\right) = \left\lVert\mathbf{J}^T\left(\frac{\partial \mathbf{\dot{s}}}{\partial \mathbf{s}}\right) - \left(\frac{\partial \mathbf{\dot{s}}}{\partial \mathbf{s}}\right)^T\mathbf{J}\right\rVert_F^2,
\end{equation}
where $\lVert\cdot\rVert_F$ is the Frobenius norm to get the matrix $\mathbf{J}^{-1}(\partial \mathbf{\dot{s}} / \partial \mathbf{s})$ as symmetric as possible. Note that $\mathbf{J}^{-1}=\mathbf{J}^T$.

\begin{theorem}
The dynamics of the states, $\mathbf{\dot{s}}\in\mathbb{R}^n$, follow the form in equation \ref{eq:hnn-dynamics} for a function $H(\mathbf{s}):\mathbb{R}^n\rightarrow\mathbb{R}$ if and only if the matrix $\mathbf{J}^{-1}(\partial \mathbf{\dot{s}} / \partial \mathbf{s})$ is symmetric.
\end{theorem}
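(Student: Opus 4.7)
The plan is to prove both directions separately, with the forward direction being a direct computation and the reverse direction relying on the Poincar\'e lemma (equivalently, the fact that a vector field with symmetric Jacobian is locally a gradient).

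For the forward direction, I would assume $\dot{\mathbf{s}} = \mathbf{J}\nabla H(\mathbf{s})$ for some $C^2$ function $H$ and differentiate with respect to $\mathbf{s}$. Since $\mathbf{J}$ is a constant matrix, this gives $\partial \dot{\mathbf{s}}/\partial \mathbf{s} = \mathbf{J}(\partial^2 H/\partial \mathbf{s}\partial \mathbf{s})$, so $\mathbf{J}^{-1}(\partial \dot{\mathbf{s}}/\partial \mathbf{s})$ equals the Hessian of $H$. By Clairaut/Schwarz, the Hessian is symmetric, and this direction is done. This is essentially the computation the authors already sketched in the paragraph preceding the theorem.

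For the reverse direction, the natural move is to introduce the auxiliary vector field $\mathbf{v}(\mathbf{s}) := \mathbf{J}^{-1}\dot{\mathbf{s}}(\mathbf{s}) = \mathbf{J}^T \dot{\mathbf{s}}$. Then its Jacobian is $\partial \mathbf{v}/\partial \mathbf{s} = \mathbf{J}^{-1}(\partial \dot{\mathbf{s}}/\partial \mathbf{s})$, which is symmetric by hypothesis. I would then invoke the standard result that a $C^1$ vector field on $\mathbb{R}^n$ (or any simply connected open subset) whose Jacobian is symmetric is the gradient of some scalar potential: concretely, one can construct
\begin{equation}
    H(\mathbf{s}) = \int_0^1 \mathbf{v}(t\mathbf{s})^T \mathbf{s}\, dt,
\end{equation}
and verify by differentiation under the integral sign together with symmetry of $\partial \mathbf{v}/\partial \mathbf{s}$ that $\nabla H = \mathbf{v}$. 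Rearranging gives $\dot{\mathbf{s}} = \mathbf{J}\mathbf{v} = \mathbf{J}\nabla H$, as desired.

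The main subtlety, and the step I would flag as the real obstacle, is the implicit topological assumption: the Poincar\'e-lemma style construction only yields a global potential $H$ when the domain on which $\dot{\mathbf{s}}$ is defined is simply connected (star-shaped suffices for the formula above). I would state this assumption explicitly, noting that for the state space $\mathbb{R}^n$ appearing in the theorem it is automatic, but that the conclusion would otherwise hold only locally. I would also note the mild regularity requirement, namely that $\dot{\mathbf{s}}$ is $C^1$ so that the mixed partials exist and are continuous, which is what allows the symmetry of the Jacobian of $\mathbf{v}$ to be identified with the closedness of the associated 1-form.
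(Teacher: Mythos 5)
Your proposal is correct and takes essentially the same route as the paper: the forward direction identifies $\mathbf{J}^{-1}(\partial \mathbf{\dot{s}}/\partial \mathbf{s})$ with the symmetric Hessian of $H$, and the reverse direction applies the Poincar\'e lemma to the closed 1-form built from $\mathbf{J}^{-1}\mathbf{\dot{s}}$, your explicit line-integral potential being just the constructive (star-shaped domain) form of that lemma. Your explicit statement of the simply-connected/star-shaped and $C^1$ hypotheses is a precision the paper leaves implicit, but it does not change the argument.
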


\begin{proof}
If the dynamics $\mathbf{\dot{s}}$ follow equation \ref{eq:hnn-dynamics}, then the matrix $\mathbf{J}^{-1}(\partial \mathbf{\dot{s}}/\partial \mathbf{s})$ represents the Hessian of the Hamiltonian $H$, which is symmetric.
To prove the converse, note that the matrix $\mathbf{J}^{-1}(\partial \mathbf{\dot{s}}/\partial \mathbf{s})=(\partial \mathbf{\dot{z}} / \partial \mathbf{s})$ can be a Jacobian matrix of $\mathbf{p}=(-\mathbf{\dot{s}}_l, \mathbf{\dot{s}}_{f})$ where $\mathbf{\dot{s}}_f$ and $\mathbf{\dot{s}}_l$ are the first half and the last half elements of $\mathbf{\dot{s}}$.
If the matrix $(\partial \mathbf{p} / \partial \mathbf{s})$ is a symmetric Jacobian matrix, then the expression below must be evaluated to zero,
$$\sum_{i>j}\left(\frac{\partial p_i}{\partial s_j} - \frac{\partial p_j}{\partial s_i}\right) \mathrm{d}s_i \wedge \mathrm{d}s_j = \sum_{i, j} \left(\frac{\partial p_i}{\partial s_j}\right) \mathrm{d}s_i \wedge \mathrm{d}s_j = \mathrm{d}f = 0,$$
where $f=\sum_i p_i ds_i$.
As $\mathrm{d}f=0$, by Poincar\'{e}'s lemma, there exists an $\alpha$ such that $\mathrm{d}\alpha = f$.
In other words, if the matrix $(\partial \mathbf{p} / \partial \mathbf{s})$ is a symmetric Jacobian matrix, then there exists an $\alpha$ such that its Hessian is equal to the Jacobian matrix, i.e. $(\partial^2 \alpha / \partial \mathbf{s}\partial \mathbf{s}) = (\partial \mathbf{p} / \partial \mathbf{s})$. This $\alpha$ corresponds to the Hamiltonian $H$ in equation \ref{eq:hnn-dynamics}.
\end{proof}

\subsection{Coordinate-transformed Hamiltonian systems}
\label{subsec:coord-tfm-hamiltonian-system}

One of the drawbacks of Hamiltonian systems is that they have to use the right coordinates for the states, usually known as the canonical coordinates.
In simple cases like simple harmonic motion of a mass-spring system, the canonical coordinates can easily be found (i.e. they are the position and momentum of the mass).
However, in more complicated cases, finding the canonical coordinates for Hamiltonian sometimes requires expertise and analytical trial-and-error.

Let us denote the canonical states' coordinates as $\mathbf{z}$ and the observed states' coordinates as $\mathbf{s}$.
As the canonical states' coordinates $\mathbf{z}$ are not usually known, we can find them using a learnable invertible transformation,
\begin{equation}
\label{eq:diffeomorphism}
    \mathbf{z} = \mathbf{g}(\mathbf{s})\ \ \mathrm{and}\ \ \mathbf{s} = \mathbf{g}^{-1}(\mathbf{z}),
\end{equation}
then learn the dynamics of $\mathbf{z}$ with a neural network, i.e. $\mathbf{\dot{z}} = \mathbf{f_z}(\mathbf{z}, t)$.
This way, the dynamics of $\mathbf{s}$ can be computed by $\mathbf{\dot{s}} = (\partial \mathbf{g}^{-1} / \partial \mathbf{z}) \mathbf{\dot{z}}$.
The invertible transformation above can be implemented using an invertible neural network~\cite{kingma2018glow}.
With the invertible transformation between $\mathbf{s}$ and $\mathbf{z}$, the dynamics of $\mathbf{s}$ in this case can be written as,
\begin{equation}
\label{eq:transformed-hnn-dynamics}
    \mathbf{\dot{s}} = \mathbf{G}^{-1}(\mathbf{s})\mathbf{J}\mathbf{G}^{-T}(\mathbf{s}) \nabla H'(\mathbf{s}),
\end{equation}
where $H'(\mathbf{s}) = H\left(\mathbf{g}(\mathbf{s})\right)$, and matrix $\mathbf{G}(\mathbf{s})=\partial \mathbf{g}/\partial \mathbf{s}$ is the Jacobian of the transformation function from $\mathbf{s}$ to $\mathbf{z}$ from equation \ref{eq:diffeomorphism}.
It can also be shown that the dynamics in equation \ref{eq:transformed-hnn-dynamics} preserve the quantity of $H'$.

\begin{theorem}
For $\mathbf{\dot{s}}$ following the equation \ref{eq:transformed-hnn-dynamics}, the quantity of $H'(\mathbf{s})$ are constant throughout the time, i.e. $dH'(\mathbf{s})/dt = 0$.
\end{theorem}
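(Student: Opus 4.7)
The plan is the standard conservation-of-Hamiltonian computation, transported through the coordinate transformation. The key fact to exploit is that $\mathbf{J}$ is skew-symmetric, i.e.\ $\mathbf{J}^T = -\mathbf{J}$, so any quadratic form $\mathbf{v}^T \mathbf{J} \mathbf{v}$ vanishes identically.

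First, apply the chain rule to $H'(\mathbf{s}(t))$ to get $dH'/dt = \nabla H'(\mathbf{s})^T \mathbf{\dot{s}}$. Then substitute the dynamics from equation \ref{eq:transformed-hnn-dynamics}, yielding
\[
\frac{dH'}{dt} = \nabla H'(\mathbf{s})^T \mathbf{G}^{-1}(\mathbf{s}) \mathbf{J}\, \mathbf{G}^{-T}(\mathbf{s}) \nabla H'(\mathbf{s}).
\]
Setting $\mathbf{v} := \mathbf{G}^{-T}(\mathbf{s}) \nabla H'(\mathbf{s})$ reduces this to $\mathbf{v}^T \mathbf{J} \mathbf{v}$, which is zero because $\mathbf{J}^T = -\mathbf{J}$ forces $\mathbf{v}^T \mathbf{J} \mathbf{v} = -\mathbf{v}^T \mathbf{J} \mathbf{v}$.

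An alternative presentation, which may be worth noting for intuition, is to pass through the canonical coordinates $\mathbf{z} = \mathbf{g}(\mathbf{s})$. Using $\mathbf{\dot{z}} = \mathbf{G}(\mathbf{s})\mathbf{\dot{s}}$ and $\nabla H'(\mathbf{s}) = \mathbf{G}(\mathbf{s})^T \nabla H(\mathbf{z})$, equation \ref{eq:transformed-hnn-dynamics} is equivalent to $\mathbf{\dot{z}} = \mathbf{J} \nabla H(\mathbf{z})$, the standard Hamiltonian flow in $\mathbf{z}$, which conserves $H(\mathbf{z})$ by the previous subsection. Since $H'(\mathbf{s}(t)) = H(\mathbf{z}(t))$ by definition, $H'$ is conserved along trajectories of $\mathbf{s}$.

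There is no real obstacle here; the content of the statement is simply that pulling back a Hamiltonian vector field through a diffeomorphism preserves the pulled-back Hamiltonian. The only thing one should be careful about is keeping the roles of $\mathbf{G}$ and $\mathbf{G}^{-T}$ straight when converting between gradients in $\mathbf{s}$ and $\mathbf{z}$, but this is a routine bookkeeping exercise rather than a genuine difficulty.
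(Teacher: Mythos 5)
Your argument is correct and is essentially the paper's own proof: the paper defines $\mathbf{D}=\mathbf{G}^{-1}\mathbf{J}\mathbf{G}^{-T}$ and kills $\nabla H'^T\mathbf{D}\nabla H'$ by skew-symmetry of $\mathbf{D}$, which is the same skew-symmetry computation you perform after substituting $\mathbf{v}=\mathbf{G}^{-T}\nabla H'$. The alternative pass through the canonical coordinates $\mathbf{z}$ is a fine extra remark but not needed.
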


\begin{proof}
Denote the matrix $\mathbf{D}$ as $\mathbf{D} = \mathbf{G}^{-1}\mathbf{JG}^{-T}$, where the matrix $\mathbf{D}$ is a skew-symmetric, because
$$
\mathbf{D}^T = (\mathbf{G}^{-1}\mathbf{JG}^{-T})^T = \mathbf{G}^{-1}\mathbf{J}^{T}\mathbf{G}^{-T} = -(\mathbf{G}^{-1}\mathbf{JG}^{-T}) = -\mathbf{D}.
$$

The rate of change of $H'$ can be written as
$dH'(\mathbf{s}) / dt =
\mathbf{u}^T \mathbf{\dot{s}} = \mathbf{u}^T \mathbf{D} \mathbf{u},$
where $\mathbf{u}=\nabla H'(s)$.
As the quantity above is a scalar, it must be equal to its transpose.
Thus,
\begin{equation*}
\mathbf{u}^T\mathbf{Du} = \left[\mathbf{u}^T\mathbf{Du}\right]^T = \mathbf{u}^T\mathbf{D}^T\mathbf{u} = -\mathbf{u}^T\mathbf{Du} = 0
\end{equation*}
as $\mathbf{D}^T=-\mathbf{D}$ because of its skew-symmetricity. Therefore, $dH'(\mathbf{s})/dt = 0$.
\end{proof}

As this system is similar to the Hamiltonian system in the previous subsection, we can apply the constraint below,
\begin{equation}
\label{eq:hamiltonian-tfm-constraint}
    \mathcal{C}_{H'}(\mathbf{\dot{s}})=\mathcal{C}_H(\mathbf{\dot{z}})
\end{equation}
where $\mathcal{C}_H(\cdot)$ is the constraint from equation \ref{eq:hamiltonian-constraint}.

\subsection{Dissipative systems}

Dissipative systems have a characteristic that their dynamics are asymptotically stable due to the frictions applied to the systems.
In terms of ODE dynamics, this asymptotic stablility can be achieved by having the real parts of all eigenvalues of the Jacobian $\partial \mathbf{\dot{s}} / \partial \mathbf{s}$ to be negative. This would ensure stability around the stationary point. Therefore, the following constraint can be applied to enforce the condition,
\begin{equation}
\label{eq:dissipative-constraint}
    \mathcal{C}_D(\mathbf{\dot{s}}) = \sum_i \left|\mathrm{max}\left\{0, \mathrm{Re}\left[\lambda_i\left(\frac{\partial \mathbf{\dot{s}}}{\partial \mathbf{s}}\right)\right] - a_i\right\}\right|^2,
\end{equation}
where $\lambda_i(\cdot)$'s are the eigenvalues of the Jacobian $\partial \mathbf{\dot{s}} / \partial \mathbf{s}$, $a_i$'s are the assumed upper bounds of the real parts of the eigenvalues.
The upper bounds $a_i$'s are sometimes useful in making sure that the eigenvalues are far from 0, further ensuring the stability.




\section{Experiments}

In order to see if the methods described above work, we tested them on various physical systems.

\textbf{Task 1 - Ideal mass-spring system.}
The first case is an ideal mass-spring system. The Hamiltonian canonical state of this system is $(x, m\dot{x})^T$ where $m$ is the mass and $x$ is the displacement of the mass from its equilibrium position. The acceleration is given by $-(k/m)\Ddot{x}$. In our experiments, we set $m=1$ and the spring constant $k=1$, and the canonical state is simply  $(x, \dot{x})^T$. We generated the data by randomly initialising the state from a normal distribution, and added white noise with $\sigma=0.1$ to the state trajectory. For training, we generated 250 trajectories, with 30 samples in each trajectory equally spaced within $t=[0, 2\pi]$.

\textbf{Task 2 - Ideal rod-pendulum.}
The second case is an ideal rod-pendulum. The Hamiltonian canonical state of this system is $(\theta, m\dot{\theta})^T$ where $m$ is the mass and $\theta$ is the angular displacement of the mass from its equilibrium position. In our experiment, we set $m=1$, and the canonical state becomes  $(x, \dot{x})^T$. The angular acceleration is given by $\Ddot{\theta} = -(g/l) \sin\theta$ with the value of $g$ and $l$ set to $3$ and $1$ respectively. Similar to Task 1, we generated the data by randomly initialising the state from a normal distribution, and added white noise with $\sigma=0.1$ to the state trajectory. For training, again, we generated 250 trajectories, with 30 samples in each trajectory equally spaced within $t=[0, 2\pi]$. This task is slightly more complicated than Task 1 since it is a non-linear dynamical system.

\textbf{Task 3 - Ideal double rod-pendulum.}
The third case is an ideal double rod-pendulum. We chose a general state of $(\theta_{1}, \theta_{2}, \dot{\theta}_{1}, \dot{\theta}_{2})^T$ since the Hamiltonian canonical state of this problem is non-trivial to obtain.
Again, we set the the two masses $m_1=m_2=1$ and the two rod lengths $l_1=l_2=1$, and generated the data by randomly initialising the state from a normal distribution and obtained the trajectory. With the aforementioned parameters, the two angular accelerations are given by
\begin{equation*}
    \Ddot{\theta}_1 = \frac{3\sin(\theta_1)-\sin(\theta_1-2\theta_2)-2\sin(\theta_1-2\theta_2)(\dot{\theta}_2^2+\dot{\theta}_1^2)\cos(\theta_1-\theta_2)}{3 - \cos(2\theta_1-2\theta_2)}
\end{equation*}{}
\begin{equation*}
    \Ddot{\theta}_2 = \frac{2\sin(\theta_1-\theta_2)( 2\dot{\theta}_1^2 - 2\cos(\theta_1) \dot{\theta}_2^2 \cos(\theta_1-\theta_2) )}{3 - \cos(2\theta_1-2\theta_2)}
\end{equation*}{Since this is a chaotic dynamical system, we omit the addition of noise to the trajectory to avoid introducing irreducible aleatoric uncertainty, and generated more data for training: 2000 trajectories, with 300 samples in each trajectory equally spaced within $t=[0, 2\pi]$.}

\textbf{Task 4 - Damped single rod-pendulum with redundant states.}
In this case the pendulum is damped with coefficient $\alpha$ to model a dissipative system. The angular acceleration is given by $\Ddot{\theta} = -(g/l)\sin\theta - \alpha \dot{\theta}$ with $m = l = g = 1$ and $\alpha = 0.05$.
The observed states are $(x, y, \dot{x}, \dot{y})^T$ where $x$ and $y$ are respectively the horizontal and vertical positions from the pivot, i.e. $x=l\sin\theta$ and $y=-l\cos\theta$.
These states are selected instead of $(\theta, \dot{\theta})$ to illustrate a system where the observed states are redundant.
As this is a dissipative system, we use the constraint from equation \ref{eq:dissipative-constraint} for the training.

\subsection{Training details}
\label{training}
For Task 1 and Task 2, we define our method as being a simple Neural Ordinary Differential Equation (NODE), but with constraint \ref{eq:hamiltonian-constraint} added to the loss function. For Task 3, since the input states were not in canonical form, we added a component of Invertible Neural Network (INN), with the FrEIA\footnote{\hyperlink{https://github.com/VLL-HD/FrEIA}{https://github.com/VLL-HD/FrEIA}} package, based on the invertible architecture defined in \cite{kingma2018glow} to transform the state coordinate system in our method, and applied constraint \ref{eq:hamiltonian-tfm-constraint}.
For Task 4, since it is a dissipative system, we added constraint \ref{eq:dissipative-constraint} to the loss function.

In the first three tasks, we compared our method with baseline NODE, HNN, LNN and NSF. For Task 4, since it is well established that HNN, LNN and NSF would not work in dissipative systems, we only compared our method to baseline NODE. We trained all models in all tasks with Adam optimiser~\cite{kingma2014adam} and the same learning rate of $10^{-4}$. We constructed multi-layer perceptrons (MLPs) for all models, with 3 layers of 200 hidden units in each case. INN with 8 blocks was used where needed, with each block having 2 layers of 100 hidden units. We chose softplus as the activation function for all models as it worked well in all cases.

In our experiments, LNN and NSF were difficult to train and training instabilities were often incurred by these two models. The training instabilities of LNN were pointed out in \cite{chen2021nsf} and we showed the instabilities of NSF in Appendix \ref{subsec:nsf}. In those cases, we ran the experiments several times to obtain the best training results so that we could compare our method to the best trained models. We found that using a smaller batch size to increase the number of training steps helped in reducing those instabilities. We note that this is a weakness of LNN and NSF as the instabilities might make training difficult in practical applications. 

For our method, we set the coefficient of the constraint as large as possible without compromising the loss of time derivatives of states. In Task 1, 2, 3 and 4 respectively, we set the coefficient to be $10^5$, $10^4$, $10^3$ and $10^2$. We used a batch size of 32 for Task 1, Task 2 and Task 4, and a larger batch size of 1280 for Task 3 since there were more data. We ran all experiments for 1000 epochs on a single NVIDIA Tesla T4 GPU.


To test, we rolled out the states from time 0 to 100, and logged the RMSE of the energy deviations. We recorded the median, 2.5\ts{th} percentile and 97.5\ts{th} percentile since there are outliers in some of the test results.

\section{Results}
\label{results}
Overall, our method performed consistently well and ranked in the top two methods in all tasks in terms of energy drift from the ground truth. 
The other methods returned mixed results, performing well in some tasks but not the others. While NSF performed reasonably well in all tasks, we show in Appendix \ref{subsec:nsf} that training instabilities are often encountered in NSF. 

Our method performed the best in Task 1 and was on par with HNN in Task 2 which logged the best performance. This is consistent with our expectation since the input coordinates are in canonical form. It is worth noting that even though the symplectic structure is enforced inherently in the HNN architecture while our method employed a soft constraint, our method still compared competitively with HNN in the first two tasks. More importantly, as can be seen in Figure \ref{fig:energy-deviation}(a) and (b), similar to the other known energy-conserving models, the periodic drift in energy of our model oscillates around a constant offset from the $x$-axis throughout the time span. While this periodic drift is mostly attributed to the state prediction error due to the white noise added to the training data, the constant offset from the $x$-axis means that the average perceived energy in the system is conserved. 

A more interesting point to note is that our method outperformed HNN in Task 1, even though in both models the symplectic structure is preserved. We suspect this is because the inherent symplecticity in HNN acts as a double-edged sword: although it is good at preserving the symplecticity, it reduces the flexibility of the model in some cases especially in the presence of noise.

Task 3 is the most difficult system to model among the first three tasks since the motion of a double-pendulum is chaotic and the input coordinates are not in canonical form. 
Table \ref{tab:energy-results} shows that our method performed the best in terms of energy drift of the predicted system. This is further corroborated by Figure \ref{fig:energy-deviation}(c) where again, the energy drift is at a constant periodic offset from the $x$-axis.


In Task 4, our method performed significantly better than vanilla NODE. It is interesting to note that while NODE failed spectacularly in the edge cases as indicated by the large RMSE upper bound in terms of energy drift, our method worked reasonably well by just including a simple dissipative bias in the loss function. Figure \ref{fig:energy-deviation}(d) shows one of the worse predictions that NODE returned, and the corresponding states are shown in Figure \ref{fig:damped-pdl} and Figure \ref{fig:damped-pdl-angle}.

The states in Task 4 are redundant as there are 4 states used while there are only 2 true states.
This makes the dynamics of the pendulum lie on a low dimensional manifold in a 4-dimensional space. Without the constraint, the error in learning the dynamics by NODE could send the states outside the manifold where no training data exists, thus making the dynamics unstable.
On the other hand, the constraint in equation \ref{eq:dissipative-constraint} makes sure the dynamics are stable around some stationary points which typically lie on the manifold.
Therefore, a slight error in learning the dynamics in our case can be corrected by attracting it back to the stationary point on the manifold.
This is an important example of ensuring the stability of a novel system since the true number of states are usually unknown.

\begin{table}[t]
\centering
\begin{tabular}{l c c c c}
 \toprule
 Case & Task 1 & Task 2 & Task 3 & Task 4 \\
 \midrule
 NODE & $0.086 ^{+0.095} _{-0.077}$ & $\infty$ & $\infty$ & $\color{blue}0.18 ^{+718.10} _{-0.059}$\\ [0.5ex]
 HNN  & $\color{blue}0.0065 ^{+0.013} _{-0.0045}$ & $\mathbf{0.015 ^{+0.066} _{-0.012}}$ & $24.44 ^{+8.69} _{-11.40}$ & N/A\\ [0.5ex]
 LNN  & $0.0060 ^{+0.021} _{-0.0046}$ & $0.032 ^{+0.10} _{-0.022}$ & $7.92 ^{+124.44} _{-7.62}$ & N/A\\ [0.5ex]
 NSF  & $0.014 ^{+0.30} _{-0.0093}$ & $0.027 ^{+0.067} _{-0.016}$ & $\color{blue}4.29 ^{+15.07} _{-3.05}$ & N/A\\ [0.5ex]
 Our method & $\mathbf{0.0017 ^{+0.0069} _{-0.0012}}$ & $\color{blue}0.024 ^{+0.068} _{-0.013}$ & $\mathbf{4.03 ^{+10.52} _{-3.16}}$ & $\mathbf{0.12 ^{+0.39} _{-0.012}}$\\
 \bottomrule
\end{tabular}
\caption{\label{tab:energy-results} RMSE of the energy deviations of 100 test trajectories. The numbers shown are mean $\pm$ standard deviation. Since there are outliers in some of the results, the median, 2.5\ts{th} and 97.5\ts{th} of the RMSE in each case are logged. The best result in each case, judging by the 97.5\ts{th} of the RMSE, is bolded, and the second best result is highlighted in blue. $\infty$ indicates numerical overflow.}
\end{table}

\begin{figure}
    \centering
    \begin{subfigure}{0.245\textwidth}
        \centering
        \includegraphics[width=\linewidth]{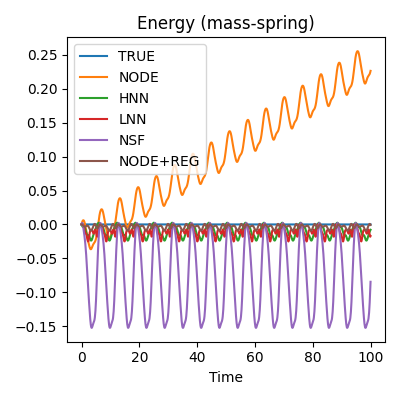}
        \caption{}
    \end{subfigure}%
    \begin{subfigure}{0.245\textwidth}
        \centering
        \includegraphics[width=\linewidth]{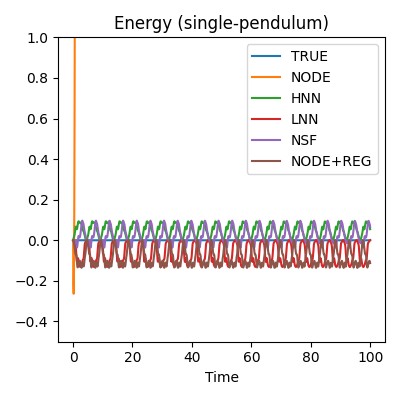}
        \caption{}
    \end{subfigure}
    \begin{subfigure}{0.245\textwidth}
        \centering
        \includegraphics[width=\linewidth]{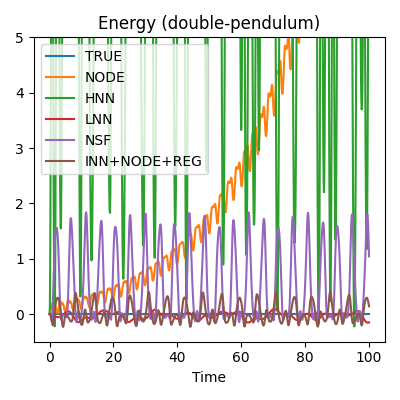}
        \caption{}
    \end{subfigure}
    \begin{subfigure}{0.245\textwidth}
        \centering
        \includegraphics[width=\linewidth]{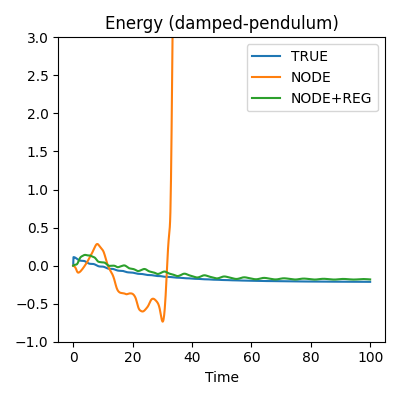}
        \caption{}
    \end{subfigure}
    \caption{The energy drift from ground truth in every model for (a) mass-spring (Task 1), (b) single-pendulum (Task 2), (c) double-pendulum (Task 3) and (d) damped-pendulum (Task 4). All test cases were taken from the same distribution as the training data, except for (c) where the test case was sampled from $\mathbf{s}_i\sim\mathcal{U}(-0.5, 0.5), \forall i$ for better illustration.}
    \label{fig:energy-deviation}
\end{figure}

\begin{figure}
    \centering
    \begin{subfigure}{0.245\textwidth}
        \centering
        \includegraphics[width=\linewidth]{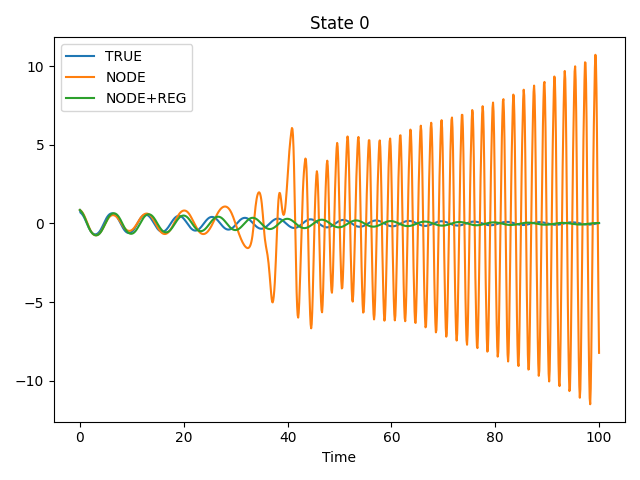}
        \caption{}
    \end{subfigure}
    \begin{subfigure}{0.245\textwidth}
        \centering
        \includegraphics[width=\linewidth]{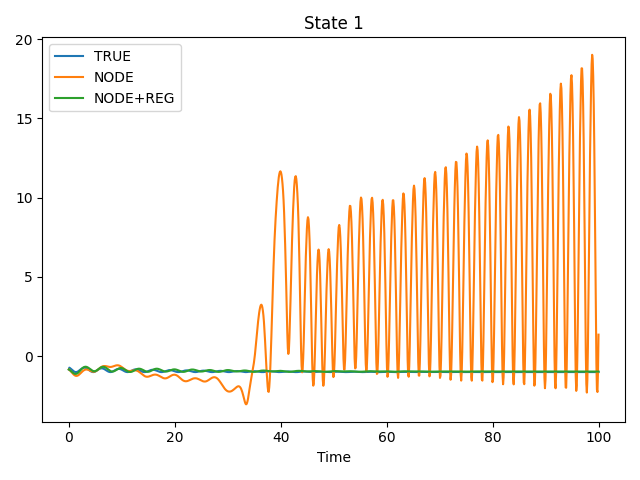}
        \caption{}
    \end{subfigure}
    \begin{subfigure}{0.245\textwidth}
        \centering
        \includegraphics[width=\linewidth]{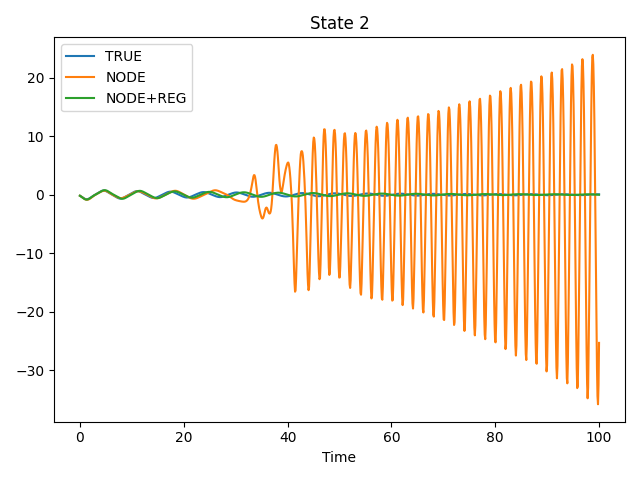}
        \caption{}
    \end{subfigure}
    \begin{subfigure}{0.245\textwidth}
        \centering
        \includegraphics[width=\linewidth]{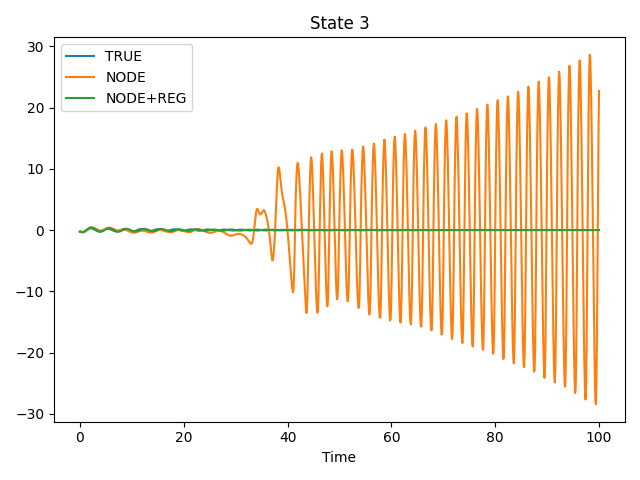}
        \caption{}
    \end{subfigure}

    \caption{The trajectories of the four states of damped-pendulum in Task 4. This sample is picked randomly from one of the edge cases in NODE. (a) shows the trajectory of $\sin\theta$, (b) shows the trajectory of $-\cos\theta$ while (c) and (d) are the trajectories of the horizontal and vertical velocities respectively.}
    \label{fig:damped-pdl}
\end{figure}

\begin{figure}
    \centering
    \begin{subfigure}{0.245\textwidth}
        \centering
        \includegraphics[width=\linewidth]{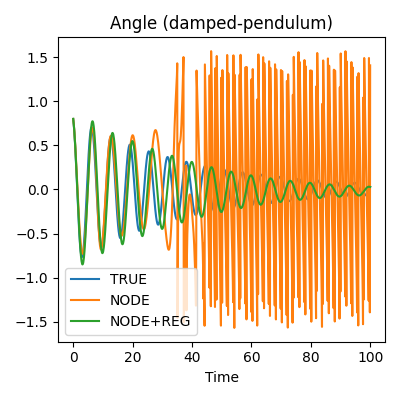}
        \caption{}
    \end{subfigure}
    \begin{subfigure}{0.245\textwidth}
        \centering
        \includegraphics[width=\linewidth]{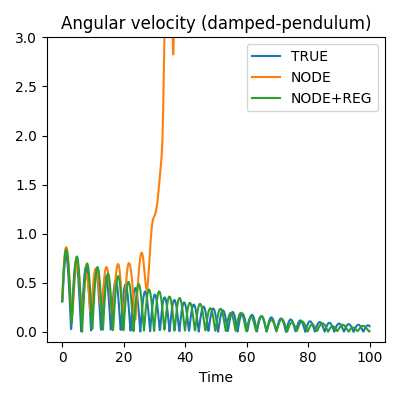}
        \caption{}
    \end{subfigure}
    \caption{The trajectories of $\theta$ and $\dot{\theta}$ of damped-pendulum in Task 4, transformed from Figure \ref{fig:damped-pdl}.}
    \label{fig:damped-pdl-angle}
\end{figure}

\section{Pixel pendulum}
\subsection{Energy-conserving pendulum}
Similar to \cite{greydanus2019hamiltonian} and \cite{chen2021nsf}, we tested our model with pixel data. This resembles our model in Task 3 where INN is used to transform the state coordinates, except that in this case we seek to encode image data in a latent coordinate system. We used the same training hyperparameters as the previous experiments, except that a regularisation weight of 500 is used here for our model and 160 videos of 100 frames equally spaced within [0, 100] each were used as the training data. We also found that ReLU activation function worked better for the autoencoder. Figure \ref{fig:pixel-plots} shows the predictions for $t=[0,100]$ using NODE, HNN, and our method while Figure \ref{fig:pixel-dynamics} shows the phase plots in the encoded latent space. The phase plot of NODE, as expected, spirals inwards indicating a drift in the symplectic structure, and Figure \ref{fig:pixel-plots} shows that its predicted pendulum dynamics are barely moving due to the energy drift. On the other hand, HNN and our method performed similarly well with the predicted pendulum dynamics closely resemble the ground truth. The phase plots of both HNN and our method are perfect circles, meaning there is no sink or source in the latent vector field and the symplectic structure is preserved. This is, however, a simple proof of concept with a vanilla autoencoder. We note that going forward, we could improve the results by using a better autoencoder or even a pretrained model.

\begin{figure}
    \centering
    \begin{subfigure}{0.245\textwidth}
        \centering
        \includegraphics[width=\linewidth]{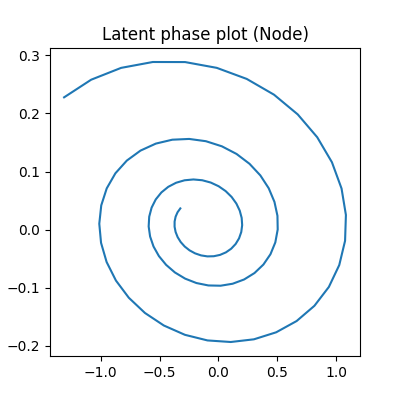}
        \caption{}
    \end{subfigure}%
    \begin{subfigure}{0.245\textwidth}
        \centering
        \includegraphics[width=\linewidth]{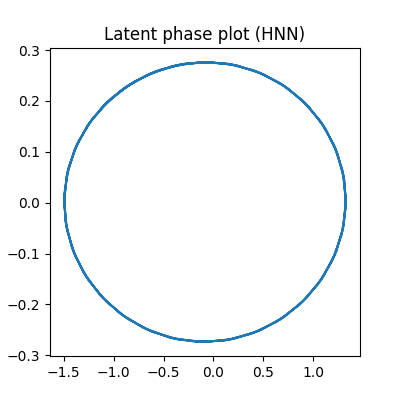}
        \caption{}
    \end{subfigure}
    \begin{subfigure}{0.245\textwidth}
        \centering
        \includegraphics[width=\linewidth]{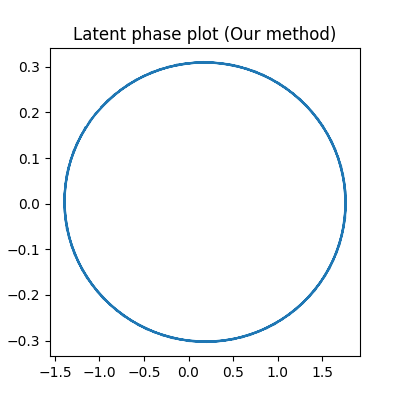}
        \caption{}
    \end{subfigure}
    \caption{Phase plot of the learned dynamics in the encoded latent space. Trained (a) vanilla NODE, (b) HNN and (c) our method. The learned dynamics here are the learned states, i.e. angular position and angular velocity of the mass, rolled out in time.}
    \label{fig:pixel-dynamics}
\end{figure}

\begin{figure}
    \centering
    \includegraphics[width=0.75\linewidth]{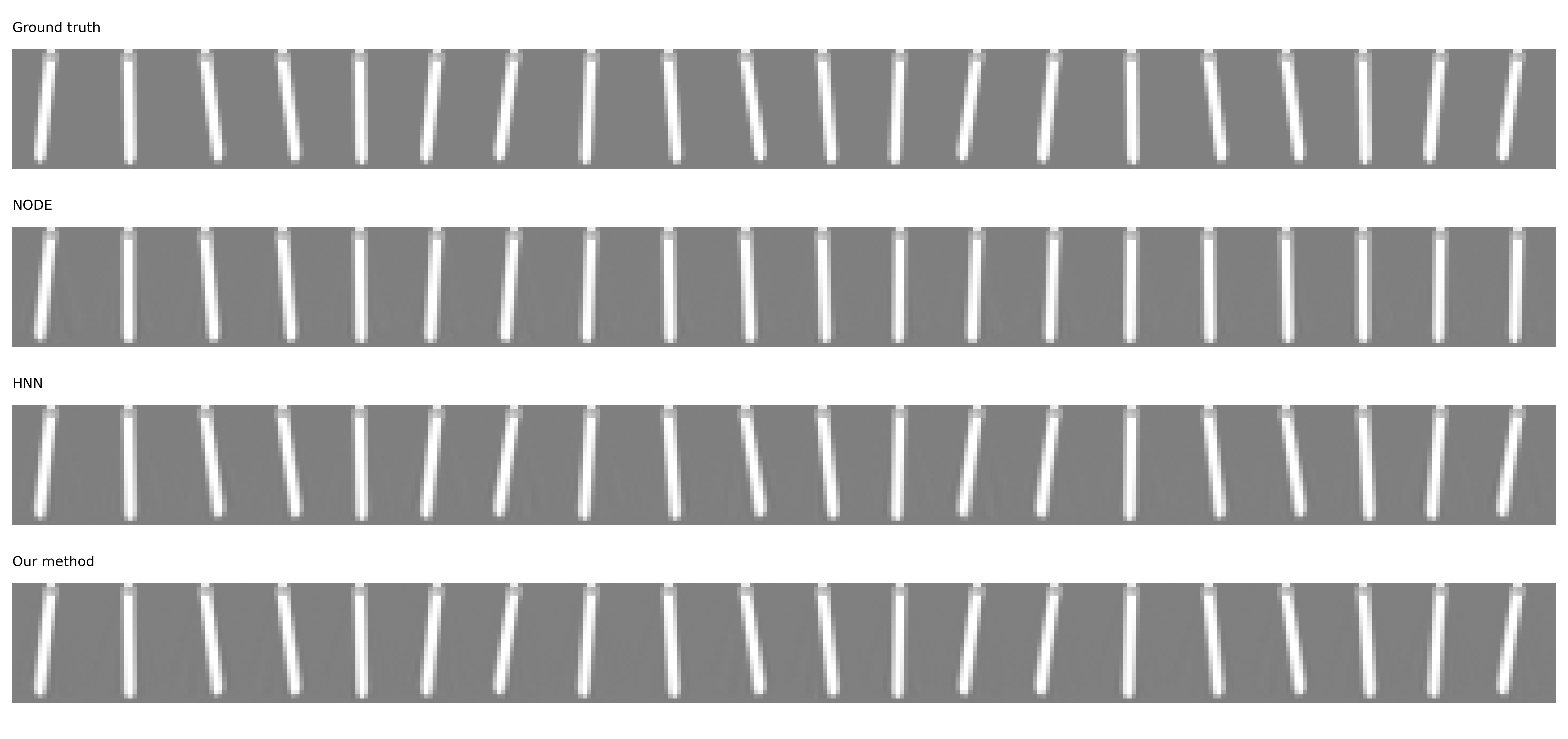}
    \caption{Predictions of the pendulum position with image data from time 0 to 100. }
    \label{fig:pixel-plots}
\end{figure}

\subsection{Damped pendulum}
To test our model for dissipative systems with image data, we modified the code from OpenAI pendulum gym \cite{1606.01540} to include a damping factor of $0.05$ to the pendulum motion. In this case, we set the number of latent states to $20$ to mirror the fact that the number of states in a system is usually unknown, and added constraint \ref{eq:dissipative-constraint} weighted by $500$ to the vanilla NODE in our method. In Figure \ref{fig:damped-pixel-plots}, the predictions of NODE are noisy due to large errors in latent space. As explained in Task 4, these errors are due to the predicted states being outside of the 2D manifold. In contrast, with a simple constraint, we successfully reduced these errors and removed the noise from the pixel predictions. 

\begin{figure}
    \centering
    \includegraphics[width=0.75\linewidth]{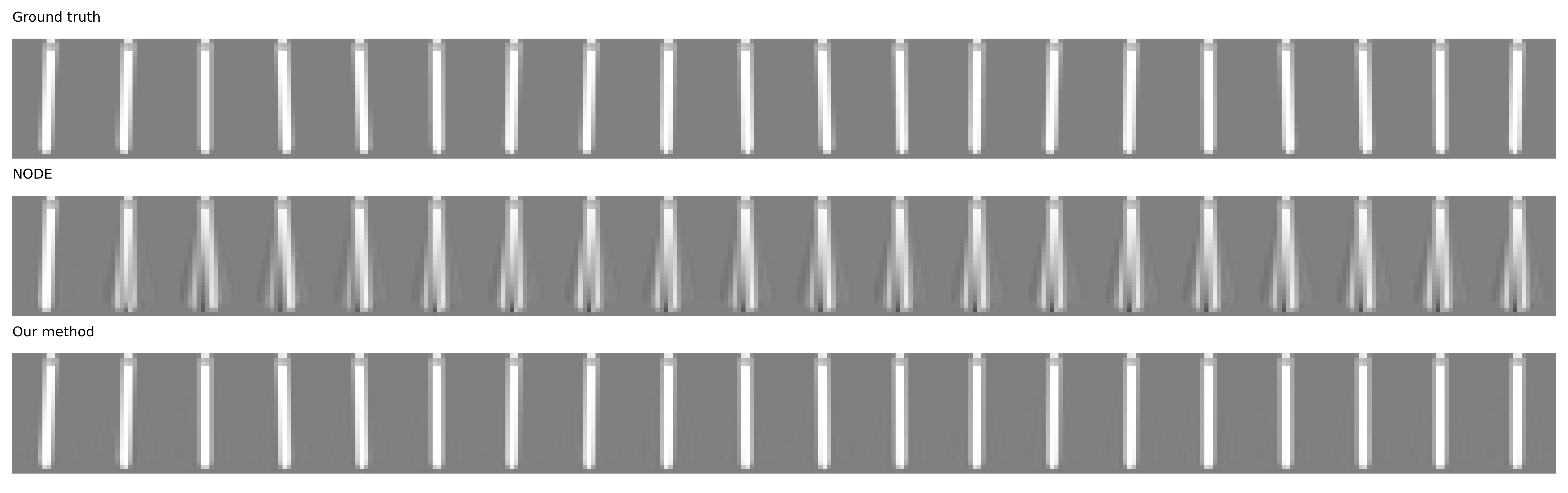}
    \caption{Predictions of the damped pendulum position with image data from time 0 to 100.}
    \label{fig:damped-pixel-plots}
\end{figure}

\section{Limitations}
\label{limitations}
Even though our method is simple and proven to work well in different scenarios including both energy-conserving and dissipative systems, we have introduced a free hyper-parameter which is the regularisation weight. This weight could be difficult to fine-tune depending on the experiments: setting too low a value may not enforce the constraint while setting too high a value may jeopardise the training. This fine-tuning process is further exacerbated if we have more than one regularisation term. As part of the future work, we may explore using Lagrangian multipliers as a potential solution.

\section{Broader impact}
\label{impact}
Unlike classical control theory that has a strong theoretical foundation to explain every phenomenon and trajectory in a deterministic dynamical system, the results obtained from neural networks could be inexplicable at times. 

\section{Conclusions}
In this paper, we proposed a simple method to enforce inductive bias in a vanilla Neural Ordinary Differential Equation (NODE) to model both energy-conserving and dissipative systems. This was conducted by incorporating a regularisation term in the loss function: in energy-conserving systems the Hamiltonian symplectic structure is enforced whereas in dissipative systems the state vector fields are made to spiral towards a sink. Our method unifies different approaches in modelling dynamical systems, and is more generally applicable since it does not require significant changes to the neural network architectures, nor does it require data in specific coordinate systems, thus showing promises to accelerate research in this domain.

\newpage

\newpage

{\small
\bibliographystyle{unsrt}
\bibliography{bibliography}

\begin{thebibliography}{10}

\bibitem{chen2018neural-ode}
Ricky~TQ Chen, Yulia Rubanova, Jesse Bettencourt, and David~K Duvenaud.
\newblock Neural ordinary differential equations.
\newblock {\em Advances in neural information processing systems}, 31, 2018.

\bibitem{greydanus2019hamiltonian}
Samuel Greydanus, Misko Dzamba, and Jason Yosinski.
\newblock Hamiltonian neural networks.
\newblock {\em Advances in Neural Information Processing Systems}, 32, 2019.

\bibitem{cranmer2020lnn}
Miles Cranmer, Sam Greydanus, Stephan Hoyer, Peter Battaglia, David Spergel,
  and Shirley Ho.
\newblock Lagrangian neural networks.
\newblock {\em arXiv preprint arXiv:2003.04630}, 2020.

\bibitem{chen2019symplectic-rnn}
Zhengdao Chen, Jianyu Zhang, Martin Arjovsky, and L{\'e}on Bottou.
\newblock Symplectic recurrent neural networks.
\newblock In {\em International Conference on Learning Representations}, 2019.

\bibitem{greydanus2022dissipative}
Sam Greydanus and Andrew Sosanya.
\newblock Dissipative hamiltonian neural networks: Learning dissipative and
  conservative dynamics separately.
\newblock {\em arXiv preprint arXiv:2201.10085}, 2022.

\bibitem{toth2019hgn}
Peter Toth, Danilo~Jimenez Rezende, Andrew Jaegle, S{\'e}bastien Racani{\`e}re,
  Aleksandar Botev, and Irina Higgins.
\newblock Hamiltonian generative networks.
\newblock {\em arXiv preprint arXiv:1909.13789}, 2019.

\bibitem{sanchez2019hogn}
Alvaro Sanchez-Gonzalez, Victor Bapst, Kyle Cranmer, and Peter Battaglia.
\newblock Hamiltonian graph networks with ode integrators.
\newblock {\em arXiv preprint arXiv:1909.12790}, 2019.

\bibitem{tuor2020constrained-stability}
Aaron Tuor, Jan Drgona, and Draguna Vrabie.
\newblock Constrained neural ordinary differential equations with stability
  guarantees.
\newblock {\em arXiv preprint arXiv:2004.10883}, 2020.

\bibitem{xiong2021neural-transient}
Jie Xiong, Alan~S Yang, Maxim Raginsky, and Elyse Rosenbaum.
\newblock Neural networks for transient modeling of circuits.
\newblock In {\em 2021 ACM/IEEE 3rd Workshop on Machine Learning for CAD
  (MLCAD)}, pages 1--7. IEEE, 2021.

\bibitem{zhong2020dissipative}
Yaofeng~Desmond Zhong, Biswadip Dey, and Amit Chakraborty.
\newblock Dissipative symoden: Encoding hamiltonian dynamics with dissipation
  and control into deep learning.
\newblock {\em arXiv preprint arXiv:2002.08860}, 2020.

\bibitem{finzi2020simplifying}
Marc Finzi, Ke~Alexander Wang, and Andrew~G Wilson.
\newblock Simplifying hamiltonian and lagrangian neural networks via explicit
  constraints.
\newblock {\em Advances in neural information processing systems},
  33:13880--13889, 2020.

\bibitem{gruver2022deconstructing}
Nate Gruver, Marc Finzi, Samuel Stanton, and Andrew~Gordon Wilson.
\newblock Deconstructing the inductive biases of hamiltonian neural networks.
\newblock {\em arXiv preprint arXiv:2202.04836}, 2022.

\bibitem{chen2021nsf}
Yuhan Chen, Takashi Matsubara, and Takaharu Yaguchi.
\newblock Neural symplectic form: Learning hamiltonian equations on general
  coordinate systems.
\newblock {\em Advances in Neural Information Processing Systems}, 34, 2021.

\bibitem{dinh2014nice}
Laurent Dinh, David Krueger, and Yoshua Bengio.
\newblock Nice: Non-linear independent components estimation.
\newblock {\em arXiv preprint arXiv:1410.8516}, 2014.

\bibitem{dinh2016density}
Laurent Dinh, Jascha Sohl-Dickstein, and Samy Bengio.
\newblock Density estimation using real nvp.
\newblock {\em arXiv preprint arXiv:1605.08803}, 2016.

\bibitem{kingma2018glow}
Durk~P Kingma and Prafulla Dhariwal.
\newblock Glow: Generative flow with invertible 1x1 convolutions.
\newblock {\em Advances in neural information processing systems}, 31, 2018.

\bibitem{zhi2021learning}
Weiming Zhi, Tin Lai, Lionel Ott, Edwin~V Bonilla, and Fabio Ramos.
\newblock Learning odes via diffeomorphisms for fast and robust integration.
\newblock {\em arXiv preprint arXiv:2107.01650}, 2021.

\bibitem{chen2022kam}
Yuhan Chen, Takashi Matsubara, and Takaharu Yaguchi.
\newblock Kam theory meets statistical learning theory: Hamiltonian neural
  networks with non-zero training loss.
\newblock In {\em Proceedings of the AAAI Conference on Artificial
  Intelligence}, volume~36, pages 6322--6332, 2022.

\bibitem{jin2022learning}
Pengzhan Jin, Zhen Zhang, Ioannis~G Kevrekidis, and George~Em Karniadakis.
\newblock Learning poisson systems and trajectories of autonomous systems via
  poisson neural networks.
\newblock {\em IEEE Transactions on Neural Networks and Learning Systems},
  2022.

\bibitem{kingma2014adam}
Diederik~P Kingma and Jimmy Ba.
\newblock Adam: A method for stochastic optimization.
\newblock {\em arXiv preprint arXiv:1412.6980}, 2014.

\bibitem{1606.01540}
Greg Brockman, Vicki Cheung, Ludwig Pettersson, Jonas Schneider, John Schulman,
  Jie Tang, and Wojciech Zaremba.
\newblock Openai gym, 2016.

\end{thebibliography}
}
\newpage

\section*{Checklist}

\begin{enumerate}

\item For all authors...
\begin{enumerate}
  \item Do the main claims made in the abstract and introduction accurately reflect the paper's contributions and scope?
    \answerYes{}
  \item Did you describe the limitations of your work?
    \answerYes{} See Section \ref{limitations}
  \item Did you discuss any potential negative societal impacts of your work?
    \answerYes{} See Section \ref{impact}
  \item Have you read the ethics review guidelines and ensured that your paper conforms to them?
    \answerYes{}
\end{enumerate}

\item If you are including theoretical results...
\begin{enumerate}
  \item Did you state the full set of assumptions of all theoretical results?
    \answerYes{} See Section \ref{methods}
        \item Did you include complete proofs of all theoretical results?
    \answerYes{} See Section \ref{methods}
\end{enumerate}

\item If you ran experiments...
\begin{enumerate}
  \item Did you include the code, data, and instructions needed to reproduce the main experimental results (either in the supplemental material or as a URL)?
    \answerYes{} See Section \ref{introduction}
  \item Did you specify all the training details (e.g., data splits, hyperparameters, how they were chosen)?
    \answerYes{} See Section \ref{training}
        \item Did you report error bars (e.g., with respect to the random seed after running experiments multiple times)?
    \answerYes{} See Table \ref{tab:energy-results} in Section \ref{results}
        \item Did you include the total amount of compute and the type of resources used (e.g., type of GPUs, internal cluster, or cloud provider)?
    \answerYes{} See Section \ref{training}
\end{enumerate}

\item If you are using existing assets (e.g., code, data, models) or curating/releasing new assets...
\begin{enumerate}
  \item If your work uses existing assets, did you cite the creators?
    \answerNA{}
  \item Did you mention the license of the assets?
    \answerNA{}
  \item Did you include any new assets either in the supplemental material or as a URL?
    \answerNA{}
  \item Did you discuss whether and how consent was obtained from people whose data you're using/curating?
    \answerNA{}
  \item Did you discuss whether the data you are using/curating contains personally identifiable information or offensive content?
    \answerNA{}
\end{enumerate}

\item If you used crowdsourcing or conducted research with human subjects...
\begin{enumerate}
  \item Did you include the full text of instructions given to participants and screenshots, if applicable?
    \answerNA{}
  \item Did you describe any potential participant risks, with links to Institutional Review Board (IRB) approvals, if applicable?
    \answerNA{}
  \item Did you include the estimated hourly wage paid to participants and the total amount spent on participant compensation?
    \answerNA{}
\end{enumerate}

\end{enumerate}


\newpage
\appendix

\section{Appendix}

\subsection{NSF training instability}
\label{subsec:nsf}
To show the training instability often incurred by NSF, we repeated the training of Task 2 with NSF 10 times using the same data and plotted the losses of each training, see Figure \ref{fig:nsf-training}. We used a single batch that included all data, and a learning rate of $10^{-3}$ here. We note that this problem could be mitigated, but not removed, by using a smaller batch size, which we did in the main paper. We also note that this instability is common to all tasks. The early plateaus in some of the runs are not due to the expressivity of the neural networks, since the exact same configuration was used in all runs. Even though NSF, when converged, gives relatively good performance in terms of energy conservation, this training instability is unacceptable in any practical application.

\begin{figure}
    \centering
    \begin{subfigure}{0.33\textwidth}
        \centering
        \includegraphics[width=\linewidth]{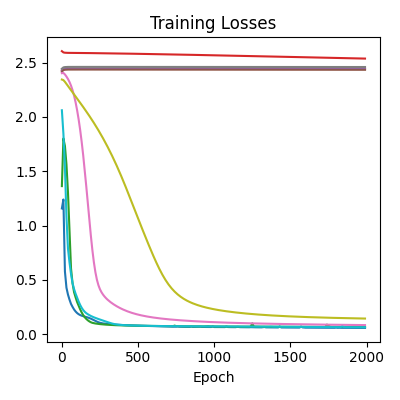}
        \caption{}
    \end{subfigure}%
    \begin{subfigure}{0.33\textwidth}
        \centering
        \includegraphics[width=\linewidth]{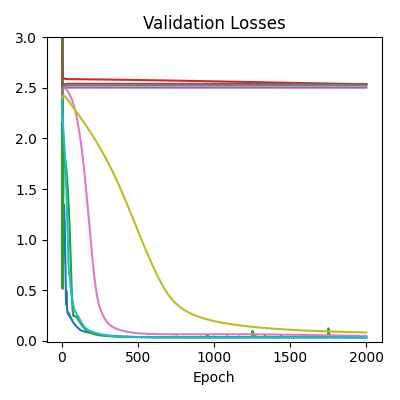}
        \caption{}
    \end{subfigure}
    \begin{subfigure}{0.33\textwidth}
        \centering
        \includegraphics[width=\linewidth]{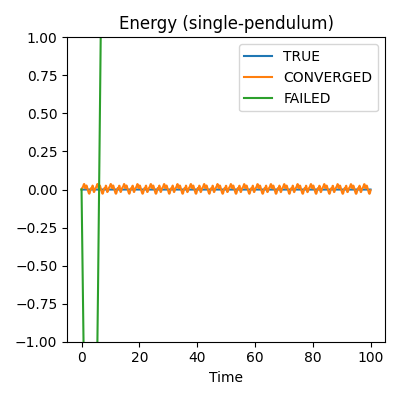}
        \caption{}
    \end{subfigure}
    \caption{(a) Training losses of 10 identical NSF runs, (b) Validation losses of 10 identical runs, and (c) Comparison of energy conservation between failed and successful NSF runs.}
    \label{fig:nsf-training}
\end{figure}

\subsection{Stability of Hamiltonian near a local minimum}
\label{subsec:stability-hamilt-near-locmin}

The dynamics of Hamiltonian system in equation \ref{eq:hnn-dynamics} can be written as
\begin{equation}
\label{eq:hnn-dynamics-appendix}
    \mathbf{\dot{s}} = \mathbf{J}\nabla H(\mathbf{s})\ \ \mathrm{where}\ \ \mathbf{J} = \left(\begin{matrix}\mathbf{0} & \mathbf{I} \\ -\mathbf{I} & \mathbf{0} \end{matrix}\right).
\end{equation}
The Jacobian matrix of the dynamics $\partial \mathbf{\dot{s}} / \partial \mathbf{s}$ is then
\begin{equation}
    \label{eq:hnn-dynamics-jacobian}
    \frac{\partial \mathbf{\dot{s}}}{\partial \mathbf{s}} = \mathbf{JB}\ \ \mathrm{where\ \ }\mathbf{B} = \frac{\partial^2 H(\mathbf{s})}{\partial \mathbf{s} \partial \mathbf{s}}
\end{equation}
where $\mathbf{B}=\partial^2 H(\mathbf{s}) / \partial \mathbf{s} \partial \mathbf{s}$ is the symmetric Hessian matrix of the Hamiltonian.
Near a local minimum of $H(\mathbf{s})$, the Hessian matrix $\mathbf{B}$ is a positive definite matrix.
With positive definitive property of $\mathbf{B}$, there exists the matrix square root and its inverse, i.e., $\mathbf{B}^{1/2}$ and $\mathbf{B}^{-1/2}$ which are also symmetric matrices.
Therefore, the matrix $\mathbf{JB}$ is similar to $\mathbf{B}^{1/2}\mathbf{JB}^{1/2}$ as
\begin{equation}
    \mathbf{B}^{1/2}\mathbf{JB}^{1/2} = \mathbf{B}^{1/2}(\mathbf{JB})\mathbf{B}^{-1/2}.
\end{equation}
As $\mathbf{J}$ is a skew-symmetric matrix and $\mathbf{B}^{1/2}$ is a symmetric matrix, the matrix $\mathbf{B}^{1/2}\mathbf{JB}^{1/2}$ is a skew-symmetric matrix.

Being a skew-symmetric matrix, the eigenvalues of $\mathbf{B}^{1/2}\mathbf{JB}^{1/2}$ are pure imaginary.
Therefore, from the similarity, we can conclude that the eigenvalues of $\mathbf{JB}$ are also pure imaginary near the local minimum of $H(\mathbf{s})$.

\end{document}